\theoremstyle{plain}
\newtheorem{theorem}{Theorem}
\newtheorem{proposition}[theorem]{Proposition}
\title{Learning to Pivot with Adversarial Networks}
\author{
  Gilles Louppe\\
  New York University\\
  \texttt{g.louppe@nyu.edu}\\
  \And
  Michael Kagan\\
  SLAC National Accelerator Laboratory\\
  \texttt{makagan@slac.stanford.edu}\\
  \And
  Kyle Cranmer\\
  New York University\\
  \texttt{kyle.cranmer@nyu.edu}
}
\begin{document}

\maketitle

\begin{abstract}

Several techniques for domain adaptation have been proposed to account for
differences in the distribution of the data used for training and testing. The
majority of this work focuses on a binary domain label. Similar problems occur
in a scientific context where there may be a continuous family of plausible data
generation processes associated to the presence of systematic uncertainties.
Robust inference is possible if it is based on a pivot -- a quantity whose
distribution does not depend on the unknown values of the nuisance parameters
that parametrize this family of data generation processes. In this work,  we
introduce and derive theoretical results for a training procedure based on
adversarial networks for enforcing the pivotal property (or, equivalently,
fairness with respect to continuous attributes) on a predictive model. The
method includes a hyperparameter to control the trade-off between accuracy and
robustness. We demonstrate the effectiveness of this approach with a toy example
and examples from particle physics.

\end{abstract}



\section{Introduction}

Machine learning techniques have been used to enhance a number of scientific
disciplines, and they have the potential to transform even more of the
scientific process. One of the challenges of applying machine learning
to scientific problems is the need to incorporate systematic
uncertainties, which affect both the
robustness of inference and the metrics used to evaluate a
particular analysis strategy.

In this work, we focus on supervised learning techniques where
systematic uncertainties can be associated to a data generation
process that is not uniquely specified. In other words, the lack of systematic
uncertainties corresponds to the (rare) case that the process that generates
training data is unique, fully specified, and an accurate representative of the
real world data. By contrast, a common situation when systematic uncertainty is
present is when the training data are not representative of the real data.
Several techniques for domain
adaptation have been developed to create models that are more robust
to this binary type of uncertainty.
A more generic situation is that there are several
plausible data generation processes, specified as a family
parametrized by continuous nuisance parameters, as is typically found in scientific domains.
In this broader context, statisticians have for long been working on robust inference techniques
based on the concept of a pivot -- a quantity whose distribution is invariant
with the nuisance parameters (see e.g., \citep{degroot1986probability}).

Assuming a probability model $p(X,Y,Z)$, where $X$ are the data, $Y$ are the
target labels, and $Z$ are the nuisance parameters, we consider the problem of
learning a predictive model $f(X)$ for $Y$ conditional on the observed values of $X$
that is robust to uncertainty in the unknown value of $Z$. We
introduce a flexible learning procedure based on adversarial networks~\citep{goodfellow2014generative}
for enforcing that  $f(X)$ is a pivot with respect to $Z$.
We derive theoretical results proving that the procedure
converges towards a model that is both optimal and statistically independent of the
nuisance parameters (if that model exists) or for which one can tune a
trade-off between accuracy and robustness (e.g., as driven by a higher level objective).
In particular, and to the best of our knowledge, our contribution is the first
solution for imposing pivotal constraints on a predictive model,
working regardless of the type of
the nuisance parameter (discrete or continuous) or of its prior.
Finally, we demonstrate the
effectiveness of the approach with a toy example and examples from particle physics.

\tikzstyle{every node}=[font=\scriptsize]
\begin{figure*}
   \usetikzlibrary{arrows}
   \def\layersep{1cm}

   \begin{tikzpicture}[shorten >= 1pt, ->, node distance=\layersep,scale=.65]
   \tikzstyle{neuron} = [circle, minimum size=0.25cm, draw=black!20, line width=0.3mm, fill=white]

   \node at (2,0) {Classifier $f$};
   \draw (-1,-0.5) rectangle (4,-5.5);

   \path[->, shorten >= 0pt] (-2,-3) edge (-1,-3);
   \node[left] at (-2,-3) {$X$};

   \path[-o, shorten >= 0pt] (1.5,-6.5) edge (1.5,-5.5);
   \node[below] at (1.5,-6.5) {$\theta_f$};

   \path[->, shorten >= 0pt] (3.5,-3) edge (6.5,-3);
   \node[above] at (5.25,-3) {$f(X;\theta_f)$};

   \path[dashed,-] (5.25,-3) edge (5.25,-6.5);
   \node[below] at (5.25,-6.5) {${\cal L}_f(\theta_f)$};

   \foreach \name / \y in {1,...,3}
       \node[neuron] (f-I-\name) at (-0.5,-1-\y) {};

   \foreach \name / \y in {1,...,5}
       \node[neuron] (f-H1-\name) at (-0.5cm+\layersep,-\y cm) {};
   \foreach \name / \y in {1,...,5}
       \node[neuron] (f-H2-\name) at (-0.5cm+3*\layersep,-\y cm) {};

   \node[neuron] (f-O) at (-0.5cm+4*\layersep,-3cm) {};

   \foreach \source in {1,...,3}
       \foreach \dest in {1,...,5}
           \path[black] (f-I-\source) edge (f-H1-\dest);

   \foreach \source in {1,...,5}
       \path[black] (f-H2-\source) edge (f-O);

   \node[black] at (1.5,-3) {...};

   \node at (11.75,0) {Adversary $r$};
   \draw (6.5,-0.5) rectangle (11.5,-5.5);

   \node[above] at (13.25,-2) {$\gamma_1(f(X;\theta_f);\theta_r)$};
   \path[-o, shorten >= 0pt] (11,-2.0) edge (15,-2.0);
   \node[above] at (13.25,-3) {$\gamma_2(f(X;\theta_f);\theta_r)$};
   \path[-o, shorten >= 0pt] (11,-3) edge (15,-3);
   \node[above] at (13.25,-4) {$\dots$};
   \path[-o, shorten >= 0pt] (11,-4) edge (15,-4);

   \path[-o, shorten >= 0pt] (9,-6.5) edge (9,-5.5);
   \node[below] at (9,-6.5) {$\theta_r$};

   \foreach \name / \y in {1,...,1}
       \node[neuron] (r-I-\name) at (7,-2-\y) {};

   \foreach \name / \y in {1,...,5}
       \node[neuron] (r-H1-\name) at (7cm+\layersep,-\y cm) {};
   \foreach \name / \y in {1,...,5}
       \node[neuron] (r-H2-\name) at (7cm+3*\layersep,-\y cm) {};

   \node[neuron] (r-O1) at (7cm+4*\layersep,-2cm) {};
   \node[neuron] (r-O2) at (7cm+4*\layersep,-3cm) {};
   \node[neuron] (r-O3) at (7cm+4*\layersep,-4cm) {};

   \foreach \source in {1,...,1}
       \foreach \dest in {1,...,5}
           \path[black] (r-I-\source) edge (r-H1-\dest);

   \foreach \source in {1,...,5}
       \path[black] (r-H2-\source) edge (r-O1);
   \foreach \source in {1,...,5}
       \path[black] (r-H2-\source) edge (r-O2);
   \foreach \source in {1,...,5}
       \path[black] (r-H2-\source) edge (r-O3);

   \node[black] at (9,-3) {...};

   \draw (15,-1.5) rectangle (18,-4.5);
   \path[->, shorten >= 0pt] (16.5,-0.5) edge (16.5,-1.5);
   \node[above] at (16.5,-0.5) {$Z$};
   \path[->, shorten >= 0pt] (16.5,-4.5) edge (16.5,-5.5);
   \node[below] at (16.5,-5.5) {$p_{\theta_r}(Z|f(X;\theta_f))$};
   \node at (16.5,-3) {${\cal P}(\gamma_1, \gamma_2, \dots)$};

   \draw[dashed,-] (16.5,-5) -| (12.75,-6.5);
   \node[below] at (12.75,-6.5) {${\cal L}_r(\theta_f, \theta_r)$};

   \end{tikzpicture}

    \caption{Architecture for the adversarial training of a binary classifier
    $f$ against a nuisance parameters $Z$. The adversary $r$ models the
    distribution $p(z|f(X;\theta_f)=s)$ of the nuisance parameters as observed only through the output $f(X;\theta_f)$ of the classifier. By
    maximizing the antagonistic objective ${\cal L}_r(\theta_f, \theta_r)$, the classifier
    $f$ forces $p(z|f(X;\theta_f)=s)$ towards the prior $p(z)$, which happens
    when $f(X;\theta_f)$ is  independent of the nuisance parameter $Z$ and therefore pivotal.}

    \label{fig:architecture}
\end{figure*}
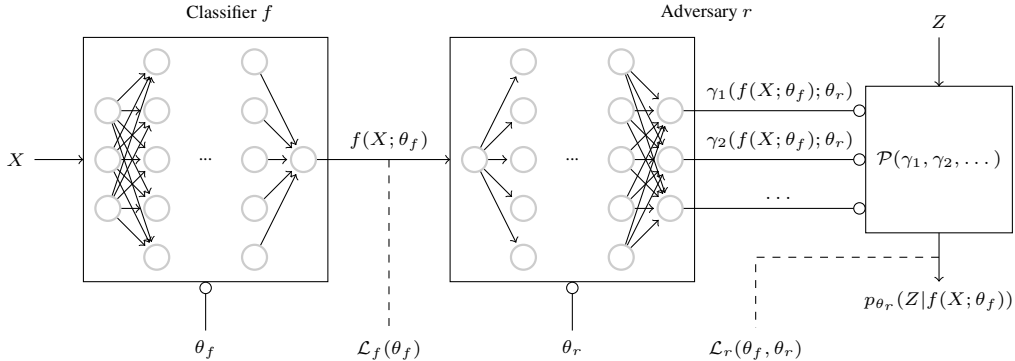



\section{Problem statement}
\label{sec:problem}

We begin with a family of data generation processes $p(X,Y,Z)$, where $X \in
\mathcal{X}$ are the data, $Y\in \mathcal{Y}$ are the target labels, and $Z\in
\mathcal{Z}$ are the nuisance parameters that can be continuous or categorical.
Let us assume that prior to incorporating the effect of uncertainty
in $Z$, our goal is to learn a regression function $f : \mathcal{X} \to
{\cal S}$ with parameters $\theta_f$ (e.g., a neural network-based probabilistic
classifier) that minimizes a loss ${\cal L}_f(\theta_f)$ (e.g., the
cross-entropy). In classification, values $\smash{s \in {\cal S} = \mathbb{R}^{|{\cal Y}|}}$
correspond to the classifier scores used for mapping hard predictions $y \in {\cal Y}$, while ${\cal S} = {\cal Y}$ for regression.

We augment our initial objective so that inference based on $f(X ; \theta_f)$ will be
robust to the value $z \in {\cal Z}$ of the nuisance parameter $Z$  -- which remains unknown at
test time. A formal way of enforcing robustness is to require that the distribution of
$f(X ; \theta_f)$ conditional on $Z$ (and possibly $Y$) be invariant with
 the nuisance parameter $Z$. Thus, we wish to find a function $f$ such that
\begin{equation}\label{eqn:criterion}
    p(f(X ; \theta_f) = s | z ) = p(f(X ; \theta_f) = s | z^\prime )
\end{equation}
for all $z,z^\prime \in  {\cal Z}$ and all values $s \in {\cal S}$ of $f(X ; \theta_f)$.
In words, we are looking for a predictive function $f$
which is a pivotal quantity with respect to the
nuisance parameters. This implies that  $f(X; \theta_f)$ and $Z$ are independent random variables.

As stated in Eqn.~\ref{eqn:criterion}, the pivotal quantity criterion is
imposed with respect to $p(X|Z)$ where $Y$ is marginalized out. In some situations
however (see e.g., Sec.~\ref{sec:hep}), class conditional independence of $f(X;
\theta_f)$ on the nuisance $Z$ is preferred, which can then be stated as
requiring
\begin{equation}\label{eqn:criterion-class}
    p(f(X ; \theta_f) = s | z, y ) = p(f(X ; \theta_f) = s | z^\prime, y )
\end{equation}
for one or several specified values $y \in {\cal Y}$.


\section{Method}
\label{sec:method}

\begin{figure*}
    \begin{minipage}{\linewidth}
    \begin{algorithm}[H]
    \caption{Adversarial training of a classifier $f$ against an adversary $r$.}

    {\footnotesize
    \begin{flushleft}
        {\it Inputs:} training data $\{ x_i, y_i, z_i \}_{i=1}^N$;
        {\it Outputs:} $\smash{\hat\theta_f}, \smash{\hat\theta_r}$.
    \end{flushleft}

    \label{alg:adversarial-training}
    \begin{algorithmic}[1]
        \FOR{$t=1$ to $T$}
            \FOR{$k=1$ to $K$} 
                \STATE{Sample minibatch $\{x_m, z_m, s_m = f(x_m;\theta_f) \}_{m=1}^M$ of size $M$;}
                \STATE{With $\theta_f$ fixed, update $r$ by ascending its stochastic gradient $\nabla_{\theta_r} E(\theta_f, \theta_r) :=$
                $$\nabla_{\theta_r} \sum_{m=1}^M \log p_{\theta_r}(z_m|s_m)  ;$$}
            \ENDFOR
            \STATE{Sample minibatch $\{x_m, y_m, z_m, s_m = f(x_m;\theta_f)  \}_{m=1}^M$ of size $M$;} 
            \STATE{With $\theta_r$ fixed, update $f$ by descending its stochastic gradient $\nabla_{\theta_f} E(\theta_f, \theta_r) :=$
            $$\nabla_{\theta_f}  \sum_{m=1}^M \left[ -\log p_{\theta_f}(y_m|x_m)  +\log p_{\theta_r}(z_m|s_m)  \right],$$
            \indent where $p_{\theta_f}(y_m|x_m)$ denotes $\mathbf{1}(y_m=0)(1-s_m) + \mathbf{1}(y_m=1)s_m$;}
        \ENDFOR
    \end{algorithmic}
    }
    \end{algorithm}
    \end{minipage}
\end{figure*}

Joint training of adversarial networks was first proposed by \citep{goodfellow2014generative} as a
way to build a generative model capable of producing samples from random noise
$z$. More specifically, the authors pit a generative model $g:
\mathbb{R}^n \to \mathbb{R}^p$ against an adversarial classifier $d :
\mathbb{R}^p \to [0, 1]$ whose antagonistic objective is to recognize
real data $X$ from generated data $g(Z)$. Both models $g$ and $d$ are trained
simultaneously, in such a way that $g$ learns to produce samples that are
difficult to identify by $d$, while $d$ incrementally adapts to changes in $g$.
At the equilibrium, $g$ models a distribution whose samples can be identified by
$d$ only by chance. That is, assuming enough capacity in $d$ and  $g$, the
distribution of $g(Z)$ eventually converges towards the real distribution
of $X$.

In this work, we repurpose adversarial networks as a means to constrain the
predictive model $f$ in order to satisfy Eqn.~\ref{eqn:criterion}. As
illustrated in Fig.~\ref{fig:architecture}, we pit $f$ against an adversarial
model $r := p_{\theta_r}(z | f(X;\theta_f)=s)$ with parameters $\theta_r$ and
associated loss ${\cal L}_r(\theta_f, \theta_r)$. This model takes  as input
realizations $s$ of $f(X; \theta_f)$ and produces as output a function
modeling the posterior probability density $p_{\theta_r}(z | f(X;\theta_f)=s)$.
Intuitively, if $p(f(X; \theta_f)=s|z)$ varies with $z$,
then the corresponding correlation can be captured by $r$. By contrast, if
$p(f(X; \theta_f)=s|z)$ is invariant with $z$, as we require, then $r$ should
perform poorly and be close to random guessing. Training $f$ such that it
additionally minimizes the performance of $r$ therefore acts as a regularization
towards Eqn.~\ref{eqn:criterion}.

If $Z$ takes discrete values, then $p_{\theta_r}$ can be represented as a
probabilistic classifier $\mathbb{R} \to \mathbb{R}^{|{\cal Z|}}$ whose
$j^\textrm{th}$ output (for $j=1, \dots, |{\cal Z}|$) is the estimated probability mass
$p_{\theta_r}(z_j|f(X;\theta_f)=s)$. Similarly, if $Z$ takes continuous values,
then we can model the posterior probability density $p(z | f(X;\theta_f)=s)$
with a sufficiently flexible  parametric family of distributions $\mathcal{P}(\gamma_1, \gamma_2, \dots)$,
where the parameters $\gamma_j$ depend on $f(X, \theta_f)$ and $\theta_r$.
The adversary $r$ may
take any form, i.e. it does not need to be a neural network, as long as it exposes a
differentiable function $p_{\theta_r}(z|f(X;\theta_f)=s)$ of sufficient capacity
to represent the true distribution. Fig.~\ref{fig:architecture} illustrates
a concrete example where $p_{\theta_r}(z | f(X;\theta_f)=s)$ is a mixture of
gaussians, as modeled with a mixture density network~\citep{bishop1994mixture}).
The $j^\textrm{th}$ output corresponds to the estimated value of the
corresponding parameter $\gamma_j$ of that distribution (e.g., the mean,
variance and mixing coefficients of its components). The estimated probability density
$p_{\theta_r}(z|f(X;\theta_f)=s)$ can then be evaluated for any $z \in {\cal Z}$ and any score $s \in {\cal S}$.


As with generative adversarial networks, we propose to
train $f$ and $r$ simultaneously, which we carry out by considering
the value function
\begin{equation}
    E(\theta_f, \theta_r) = {\cal L}_f(\theta_f) - {\cal L}_r(\theta_f, \theta_r)
\end{equation}
that we optimize by finding the minimax solution
\begin{equation}\label{eqn:min_thetaf}
    \smash{\hat\theta_f, \hat\theta_r} = \arg \min_{\theta_f} \max_{\theta_r} E(\theta_f, \theta_r).
\end{equation}
Without loss of generality, the adversarial training procedure to obtain
$(\smash{\hat\theta_f}, \smash{\hat\theta_r})$ is formally presented in
Algorithm~\ref{alg:adversarial-training} in the case of a binary classifier $f :
\mathbb{R}^p \to [0,1]$ modeling $p(Y=1|X)$. For reasons further explained
in Sec.~\ref{sec:theory}, ${\cal L}_f$ and ${\cal L}_r$  are respectively set to the
expected value of the
negative log-likelihood of $Y|X$ under $f$ and of $Z|f(X;\theta_f)$ under
$r$:
\begin{align}
    {\cal L}_f(\theta_f) &= \mathbb{E}_{x \sim X}  \mathbb{E}_{y \sim Y|x} [ -\log p_{\theta_f} (y|x) ], \\
    {\cal L}_r(\theta_f, \theta_r) &= \mathbb{E}_{s \sim f(X;\theta_f)}  \mathbb{E}_{z \sim Z|s} [-\log p_{\theta_r} (z|s)].
\end{align}
The optimization algorithm consists in using stochastic gradient descent
alternatively for solving Eqn.~\ref{eqn:min_thetaf}.
Finally, in the case of a class conditional pivot, the settings are the
same, except that the adversarial term ${\cal L}_r(\theta_f, \theta_r)$ is restricted to $Y=y$.


\section{Theoretical results}
\label{sec:theory}

In this section, we show that in the setting of
Algorithm~\ref{alg:adversarial-training} where ${\cal L}_f$ and ${\cal L}_r$ are
respectively set to expected value of the negative log-likelihood of $Y|X$ under
$f$ and of $Z|f(X;\theta_f)$ under $r$, the minimax solution of Eqn.~\ref{eqn:min_thetaf}
corresponds to a classifier
$f$ which is a pivotal quantity.

In this setting, the nuisance parameter $Z$ is considered as a random variable
of prior $p(Z)$, and our goal is to find a function
$f(\cdot;\theta_f)$ such that $f(X;\theta_f)$ and $Z$ are independent random
variables.   Importantly, classification of $Y$ with respect to $X$ is
considered in the context where $Z$ is marginalized out, which means that the
classifier minimizing ${\cal L}_f$ is optimal with respect to $Y|X$, but not
necessarily with $Y|X,Z$.
Results hold for a nuisance parameters $Z$ taking either
categorical or continuous values. By abuse of notation, $H(Z)$ denotes the
differential entropy in this latter case. Finally, the  proposition below is
derived in a non-parametric setting, by assuming that both $f$ and $r$ have
enough capacity.

\begin{proposition}\label{prop:2}
If there exists a minimax solution $(\smash{\hat\theta}_f, \smash{\hat\theta}_r)$
for Eqn.~\ref{eqn:min_thetaf} such that
$E(\hat\theta_f, \hat\theta_r) = H({Y|X}) - H(Z)$, then
$f(\cdot;\smash{\hat\theta}_f)$ is both an optimal classifier and a pivotal
quantity.
\end{proposition}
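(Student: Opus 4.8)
The plan is to sandwich the minimax value between information-theoretic quantities and then extract both conclusions from the equality case. First I would evaluate the inner maximization for an arbitrary fixed $\theta_f$. The adversarial loss ${\cal L}_r(\theta_f,\theta_r)=\mathbb{E}_{s\sim f(X;\theta_f)}\mathbb{E}_{z\sim Z|s}[-\log p_{\theta_r}(z|s)]$ is an $s$-averaged cross-entropy between the true conditional $p(z|s)$ and the model $p_{\theta_r}(z|s)$, so Gibbs' inequality gives ${\cal L}_r(\theta_f,\theta_r)\ge H(Z\mid f(X;\theta_f))$, with equality attained --- using the non-parametric ``enough capacity'' assumption on $r$ --- precisely when $p_{\theta_r}(z|s)=p(z|s)$ almost everywhere. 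Hence $\max_{\theta_r}E(\theta_f,\theta_r)={\cal L}_f(\theta_f)-H(Z\mid f(X;\theta_f))$ for every $\theta_f$, the argument being identical in the continuous case with $H$ read as differential entropy.

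Next I would lower-bound this expression by $H(Y|X)-H(Z)$ via two elementary facts. On the classifier side, ${\cal L}_f(\theta_f)=\mathbb{E}_x\mathbb{E}_{y\sim Y|x}[-\log p_{\theta_f}(y|x)]$ is again an averaged cross-entropy, so ${\cal L}_f(\theta_f)\ge H(Y|X)$, with equality iff $p_{\theta_f}(\cdot|x)=p(\cdot|x)$ for $p(X)$-almost every $x$, i.e. iff $f(\cdot;\theta_f)$ is an optimal classifier of $Y$ given $X$. On the adversary side, conditioning does not increase (differential) entropy, so $H(Z\mid f(X;\theta_f))\le H(Z)$, with equality iff $f(X;\theta_f)$ and $Z$ are independent. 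Combining the two, $\max_{\theta_r}E(\theta_f,\theta_r)\ge H(Y|X)-H(Z)$ for every $\theta_f$, and therefore $\min_{\theta_f}\max_{\theta_r}E(\theta_f,\theta_r)\ge H(Y|X)-H(Z)$.

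Then I would invoke the hypothesis. If a minimax solution $(\hat\theta_f,\hat\theta_r)$ satisfies $E(\hat\theta_f,\hat\theta_r)=H(Y|X)-H(Z)$, then since a minimax point obeys $E(\hat\theta_f,\hat\theta_r)=\max_{\theta_r}E(\hat\theta_f,\theta_r)=\min_{\theta_f}\max_{\theta_r}E(\theta_f,\theta_r)$, both inequalities above must be saturated at $\hat\theta_f$: namely ${\cal L}_f(\hat\theta_f)=H(Y|X)$ and $H(Z\mid f(X;\hat\theta_f))=H(Z)$. The first equality is exactly the statement that $f(\cdot;\hat\theta_f)$ is an optimal classifier; the second, by the equality case of ``conditioning reduces entropy'', says $f(X;\hat\theta_f)$ and $Z$ are independent, which is Eqn.~\ref{eqn:criterion} --- so $f(\cdot;\hat\theta_f)$ is pivotal. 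The class-conditional version (Eqn.~\ref{eqn:criterion-class}) follows by replaying the argument with every expectation restricted to the event $Y=y$.

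I expect the delicate point to be the first step --- certifying that the optimal adversary recovers the true posterior, so that $\min_{\theta_r}{\cal L}_r(\theta_f,\theta_r)=H(Z\mid f(X;\theta_f))$ exactly. This is where the non-parametric assumption is doing real work, and in the continuous case one must also be comfortable with the usual caveats about differential entropy (finiteness, integrability of $\log p_{\theta_r}$, and the chosen family ${\cal P}(\gamma_1,\gamma_2,\dots)$ genuinely containing $p(z|s)$). Once that is granted, the remainder is bookkeeping with Gibbs' inequality and the entropy--independence equivalence.
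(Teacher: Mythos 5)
Your proposal is correct and follows essentially the same route as the paper's proof: reduce the inner maximization to ${\cal L}_f(\theta_f)-H(Z\mid f(X;\theta_f))$ via the optimal adversary recovering the true posterior, lower-bound this by $H(Y|X)-H(Z)$ using ${\cal L}_f\ge H(Y|X)$ and $H(Z\mid f(X;\theta_f))\le H(Z)$, and read off optimality and independence from the equality case. Your version is, if anything, slightly more explicit than the paper's about why both inequalities must be saturated simultaneously.
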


\begin{proof}

For fixed $\theta_f$, the adversary $r$ is optimal at
\begin{equation}
    \hat{\hat\theta}_r = \arg \max_{\theta_r} E(\theta_f, \theta_r)  = \arg \min_{\theta_r} {\cal L}_r(\theta_f, \theta_r),
\end{equation}
in which case $p_{\hat{\hat\theta}_r}(z|f(X;\theta_f)=s) =
p(z|f(X;\theta_f)=s)$ for all $z$ and all $s$, and ${\cal L}_r$ reduces to the expected entropy
$\mathbb{E}_{s \sim f(X;\theta_f)} [ H({Z|f(X;\theta_f)=s}) ]$ of the conditional distribution of the nuisance parameters.
This expectation corresponds to the conditional entropy of the random variables
$Z$ and $f(X;\theta_f)$ and can be written as $H(Z|f(X;\theta_f))$.
Accordingly, the
value function $E$ can be restated as a function depending on $\theta_f$ only:
\begin{equation}
    E'(\theta_f) = {\cal L}_f(\theta_f) -  H({Z|f(X;\theta_f)}).
\end{equation}
In particular, we have the lower bound
\begin{equation}
    H({Y|X}) - H(Z) \leq {\cal L}_f(\theta_f) - H({Z|f(X;\theta_f)})
\end{equation}
where the equality holds at $\smash{\hat\theta}_f = \arg \min_{\theta_f}
E'(\theta_f)$  when:
\begin{itemize}
    \item $\smash{\hat\theta}_f$ minimizes the negative log-likelihood of $Y|X$ under $f$,
    which happens when $\smash{\hat\theta}_f$ are the parameters
    of an optimal classifier. In this case, ${\cal L}_f$ reduces to its
    minimum value $H({Y|X})$.

    \item $\smash{\hat\theta}_f$ maximizes the conditional entropy
    $H({Z|f(X;\theta_f)})$, since $H(Z|f(X;\theta)) \leq H(Z)$ from the properties of entropy. Note that this
    latter inequality holds for both the discrete and the differential definitions of entropy.
\end{itemize}
By assumption, the lower bound is active, thus we have $H(Z|f(X;\theta_f)) = H(Z)$
because of the second condition, which happens exactly when $Z$ and $f(X;\theta_f)$
are independent variables. In other words,  the
optimal classifier $f(\cdot;\smash{\hat\theta}_f)$ is also a pivotal
quantity.
\end{proof}

Proposition~\ref{prop:2} suggests that if at each step of
Algorithm~\ref{alg:adversarial-training} the adversary $r$ is allowed to reach
its optimum given $f$ (e.g., by setting $K$ sufficiently high) and if $f$ is
updated to improve ${\cal L}_f(\theta_f) -  H({Z|f(X;\theta_f)})$ with
sufficiently small steps, then $f$ should converge to a classifier that is both
optimal and pivotal, provided such a classifier exists. Therefore, the adversarial term ${\cal L}_r$
can be regarded as a way to select among
the class of all optimal classifiers a function $f$ that is also pivotal.
Despite the former theoretical characterization of the minimax solution
of Eqn.~\ref{eqn:min_thetaf}, let us note that formal guarantees of
convergence towards that solution by
Algorithm~\ref{alg:adversarial-training} in the case where a finite number $K$
of steps is taken for $r$ remains to be proven.

In practice, the assumption of existence of an optimal and pivotal classifier may
not hold because the nuisance parameter directly shapes the decision boundary.
In this case, the lower bound
\begin{equation}\label{eqn:lower-bound-strict}
    H({Y|X}) - H(Z) < {\cal L}_f(\theta_f) - H({Z|f(X;\theta_f)})
\end{equation} is strict: $f$ can either be an optimal classifier or a
pivotal quantity, but not both simultaneously. In this situation, it is natural
to rewrite the value function $E$  as
\begin{equation}\label{eqn:vf-lambda}
    E_\lambda(\theta_f, \theta_r) = {\cal L}_f(\theta_f) - \lambda {\cal L}_r(\theta_f, \theta_r),
\end{equation}
where $\lambda \geq 0$ is a hyper-parameter controlling the trade-off between
the performance of $f$ and its independence with respect to the nuisance
parameter. Setting $\lambda$ to a large value will preferably enforces $f$ to be
pivotal while setting $\lambda$ close to $0$ will rather constraint $f$ to be
optimal. When the lower bound is strict, let us note however that there may
exist distinct but equally good solutions $\theta_f,\theta_r$ minimizing
Eqn.~\ref{eqn:vf-lambda}. In this zero-sum game, an increase in accuracy would
exactly be compensated by a decrease in pivotality and vice-versa. How to best
navigate this Pareto frontier to maximize a higher-level objective remains a
question open for future works.

Interestingly, let us finally emphasize that our results hold using only the
(1D) output $s$ of $f(\cdot;\theta_f)$ as
input to the adversary. We could similarly enforce an intermediate
representation of the data to be pivotal, e.g. as in
\citep{ganin2014unsupervised}, but this is not necessary.


\section{Experiments}

In this section, we empirically demonstrate the effectiveness of the approach
with a toy example and examples from particle physics. Notably,
there are no other other approaches to compare to in the case of continuous
nuisance parameters, as further explained in Sec.~\ref{sec:related}. In the case
of binary parameters, we do not expect results to be much different from
previous works.

\subsection{A toy example with a continous nuisance parameter}
\label{sec:toy}

\begin{figure*}
    \begin{center}
        \includegraphics[width=0.245\textwidth]{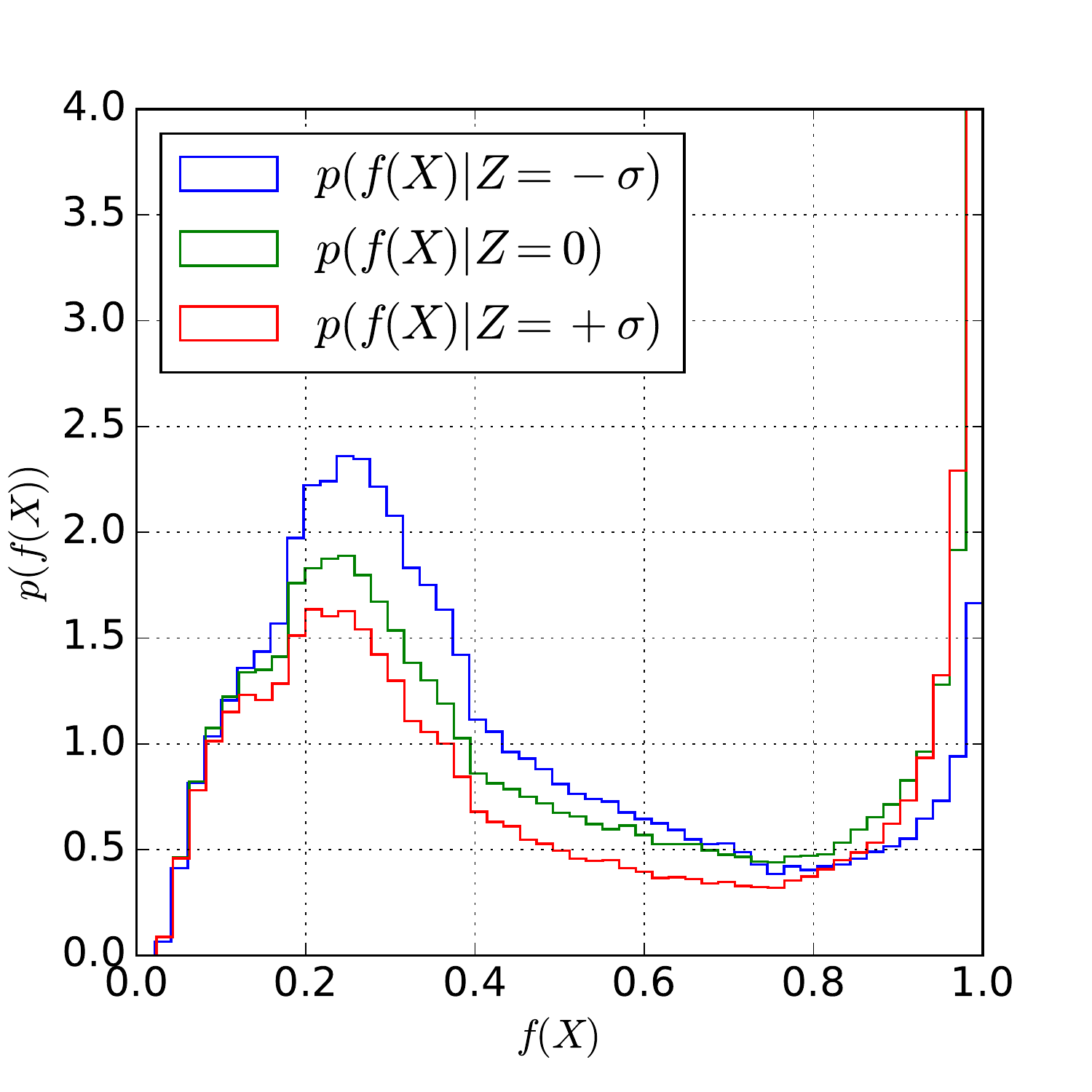}
        \includegraphics[width=0.245\textwidth]{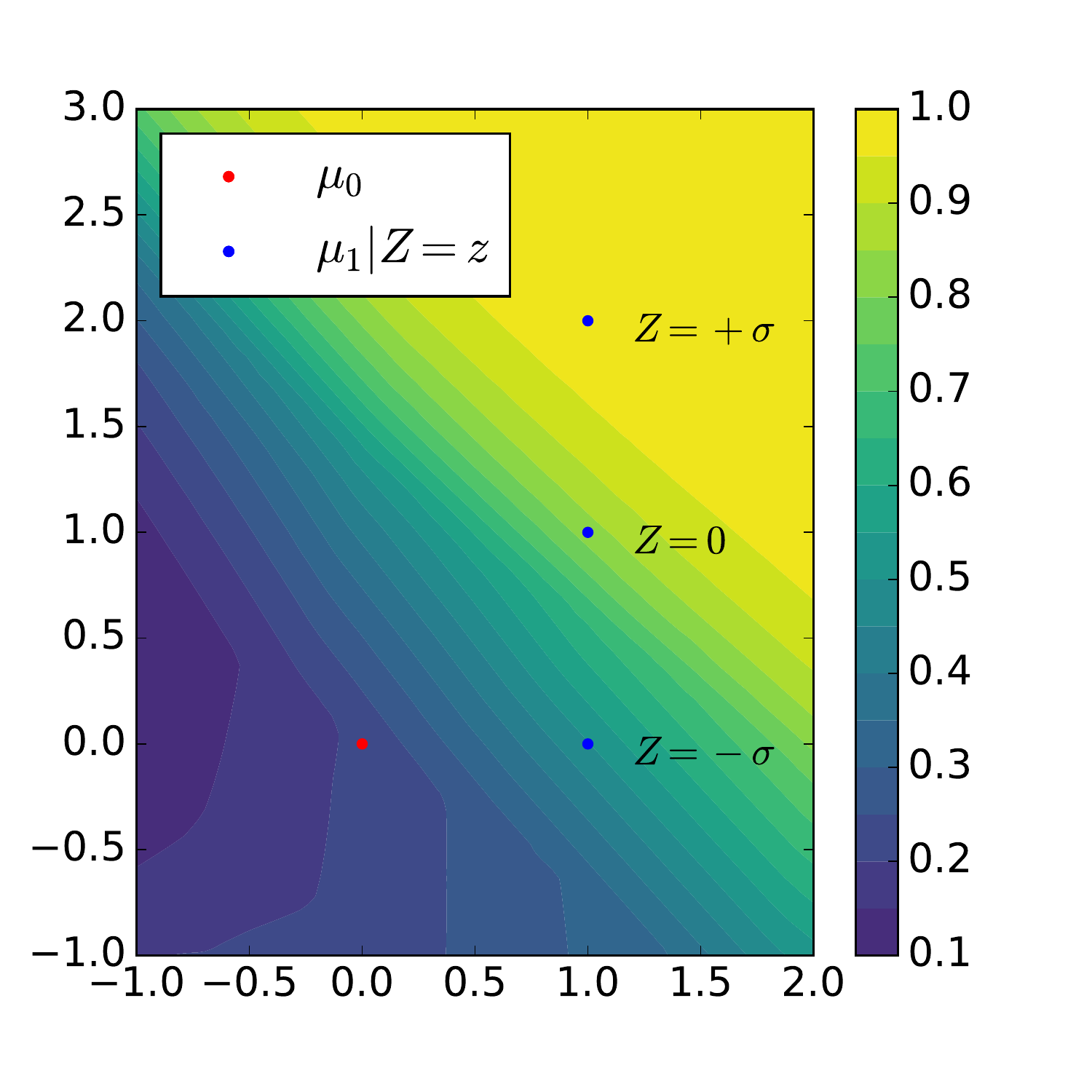}
        \includegraphics[width=0.245\textwidth]{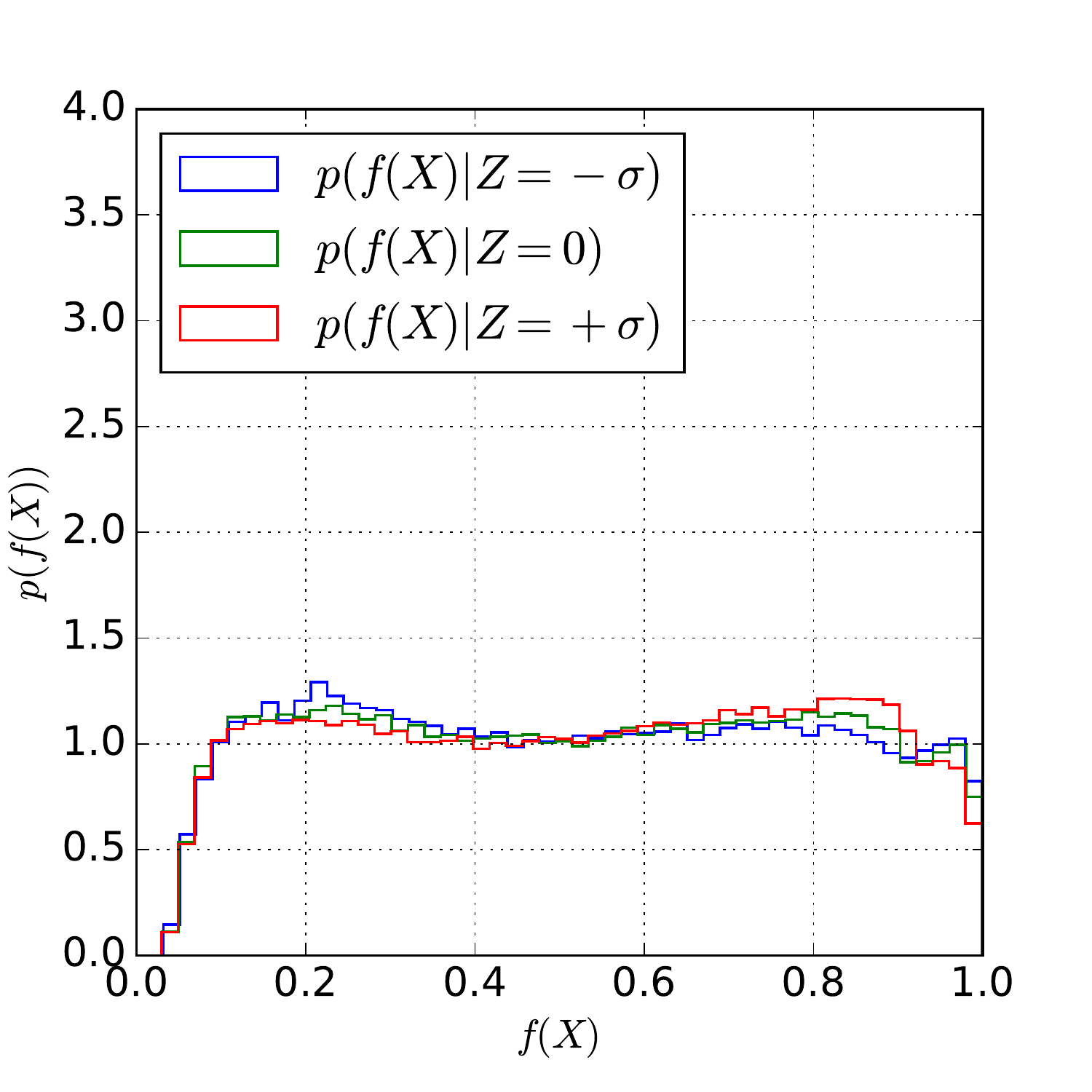}
        \includegraphics[width=0.245\textwidth]{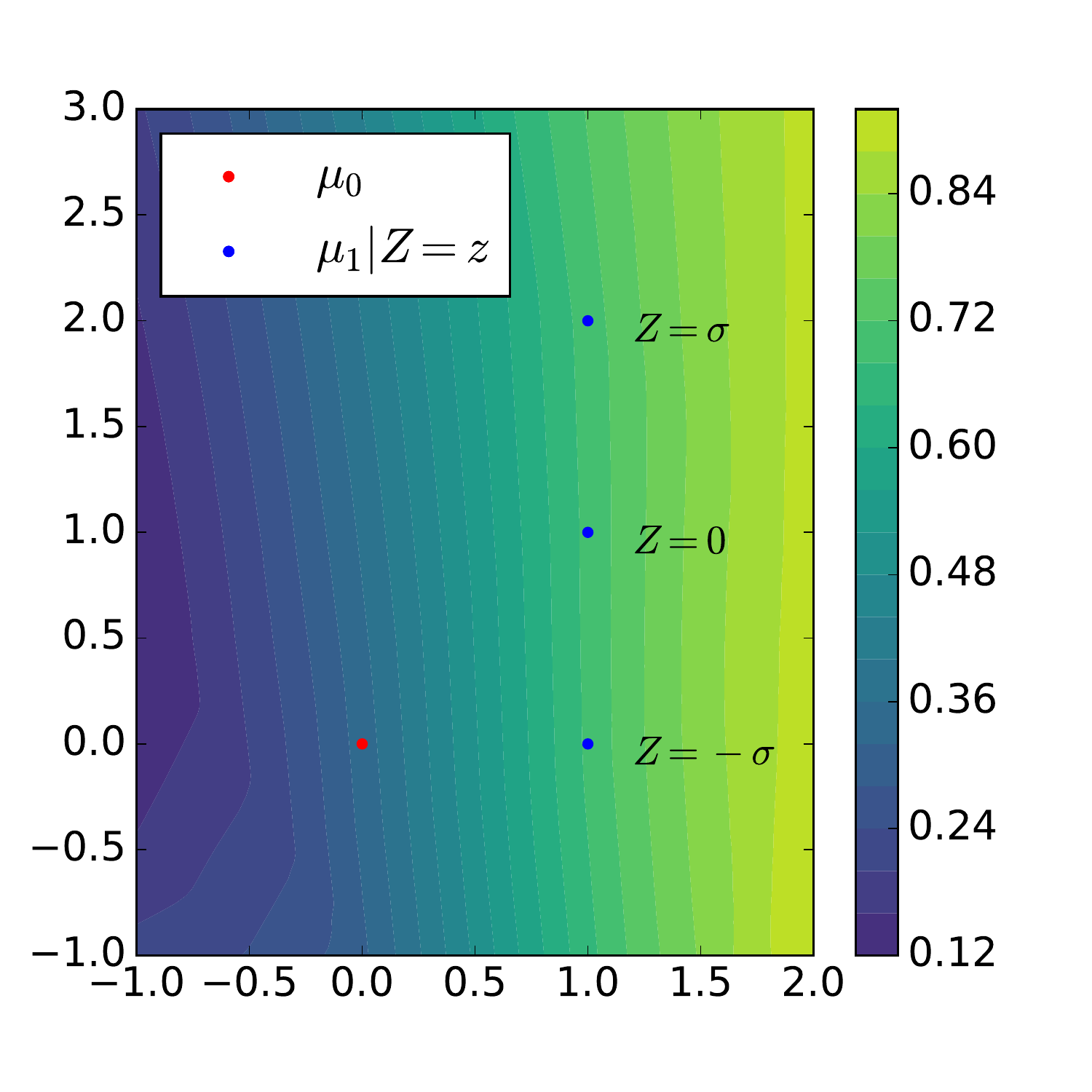}
    \end{center}

    \caption{Toy example.
    (Left) Conditional probability densities of the decision scores at $Z=-\sigma, 0, \sigma$
        without adversarial training. The resulting densities
       are dependent on the continuous parameter $Z$, indicating that $f$ is not pivotal.
    (Middle left) The associated decision surface, highlighting
       the fact that samples are easier to classify for values of $Z$ above $\sigma$,
       hence explaining the dependency.
    (Middle right) Conditional probability densities of the decision scores at $Z=-\sigma, 0, \sigma$ when $f$ is
       built with adversarial training.
       The resulting densities are now almost identical to each other, indicating only a
       small dependency on $Z$.
    (Right) The associated decision surface, illustrating how adversarial
       training bends the decision function vertically to erase the dependency on $Z$.
    }
    \label{fig:toy}
\end{figure*}

As a guiding toy example, let us consider the binary classification of 2D
data drawn from multivariate gaussians with equal priors, such that
\begin{align}
    x &\sim {\cal N}\left ((0,0), \begin{bmatrix}
                              1 & -0.5 \\
                              -0.5 & 1
                            \end{bmatrix}\right) &\text{ when } Y=0, \\
    x|Z=z &\sim {\cal N}\left ((1,1+z),  \begin{bmatrix}
                              1 & 0 \\
                              0 & 1
                             \end{bmatrix}\right) &\text{ when } Y=1.
\end{align}
The continuous nuisance parameter $Z$ here represents our
uncertainty about the location of the mean of the second gaussian. Our goal is to
build a classifier $f(\cdot;\theta_f)$ for predicting $Y$ given $X$, but such that
the probability distribution of $f(X;\theta_f)$ is invariant with respect to the
nuisance parameter $Z$.

Assuming a gaussian prior $z \sim {\cal N}(0,1)$, we generate
data $\{ x_i, y_i, z_i \}_{i=1}^N$, from which we train a neural
network  $f$ minimizing ${\cal L}_f(\theta_f)$ without considering its
adversary $r$. The network architecture comprises 2 dense hidden layers of 20
nodes respectively with tanh and ReLU activations, followed  by a dense output layer with a single
node with a sigmoid activation. As shown in Fig.~\ref{fig:toy}, the resulting
classifier is not pivotal, as the conditional probability densities of its
decision scores $f(X;\theta_f)$ show large discrepancies between values $z$ of
the nuisance parameters. While not shown here, a classifier trained only from data
generated at the nominal value $Z=0$ would also not be pivotal.

Let us now consider the joint training of $f$ against an adversary $r$
implemented as a mixture density network modeling $Z|f(X;\theta_f)$ as a mixture
of five gaussians. The network architecture of $r$ comprises 2
dense hidden layers of 20 nodes with ReLU activations, followed by an
output layer of 15 nodes corresponding to the means, standard deviations and
mixture coefficients of the gaussians. Output nodes for the mean values
come with linear activations, output nodes for the standard deviations with
exponential activations to ensure positivity, while output nodes for the mixture
coefficients implement the softmax function to ensure positivity and
normalization. When running Algorithm~\ref{alg:adversarial-training} as
initialized with the classifier $f$ obtained previously, adversarial training
effectively reshapes the decision function so it that becomes almost independent
on the nuisance parameter, as shown in Fig.~\ref{fig:toy}. The
conditional probability densities of the decision scores $f(X;\theta_f)$ are now
very similar to each other, indicating only a residual  dependency on the
nuisance, as theoretically expected. The dynamics of adversarial training is
illustrated in Fig.~\ref{fig:toy-training}, where the losses ${\cal L}_f$,
${\cal L}_r$ and ${\cal L}_f - \lambda {\cal L}_r$ are evaluated after each
iteration. In the first iterations,
we observe that the global objective ${\cal L}_f - \lambda {\cal L}_r$ is
minimized by making the classifier less accurate, hence the corresponding
increase of ${\cal L}_f$, but which results in a classifier that is more
pivotal, hence the associated increase of ${\cal L}_r$ and the total net
benefit. As learning goes, minimizing $E$ requires making predictions that
are more accurate, hence decreasing ${\cal L}_f$, or that are even less
dependent on $Z$, hence shaping $p_{\theta_r}$ towards the prior $p(Z)$. Indeed,
${\cal L}_f$ eventually starts decreasing, while remaining bounded from below by
$\min_{\theta_f} {\cal L}_f(\theta_f)$ as approximated by the dashed line in the
first plot. Similarly,  ${\cal L}_r$ tends towards the differential entropy
$H(Z)$ of the prior (where $H(Z) = \log(\sigma \sqrt{2 \pi e}) = 1.419$ in the
case of a standard normal), as shown by the dashed line in the
second plot. Finally, let us note that the ideal situation of a classifier that
is both optimal and pivotal is unreachable for this problem, as shown
in the third plot by the  offset between ${\cal L}_f - \lambda {\cal L}_r$ and
the dashed line approximating $H({Y|X}) - \lambda H(Z)$.

\begin{figure}
\centering
\begin{minipage}{.46\linewidth}
    \centering
        \includegraphics[width=\textwidth]{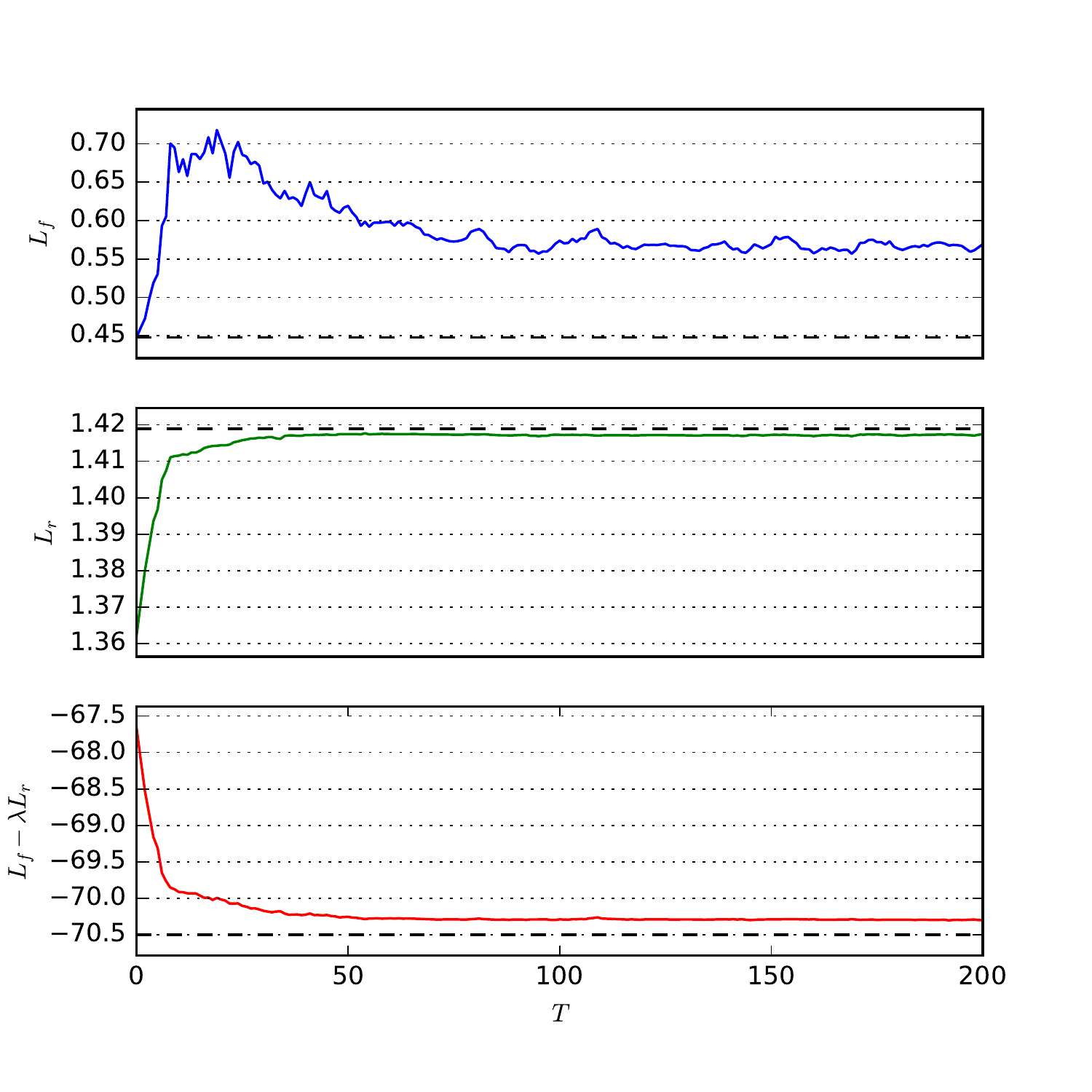}\vspace{-1em}
        \caption{Toy example. Training curves for ${\cal L}_f(\theta_f)$, ${\cal L}_r(\theta_f, \theta_r)$
                 and ${\cal L}_f(\theta_f) - \lambda {\cal L}_r(\theta_f, \theta_r)$.
                 Initialized with a pre-trained classifier $f$, adversarial training was performed for $200$ iterations, mini-batches of size $M=128$, $K=500$ and $\lambda=50$.}
        \label{fig:toy-training}
\end{minipage}
\hspace{.05\linewidth}
\begin{minipage}{.46\linewidth}
    \centering
        \includegraphics[width=\textwidth]{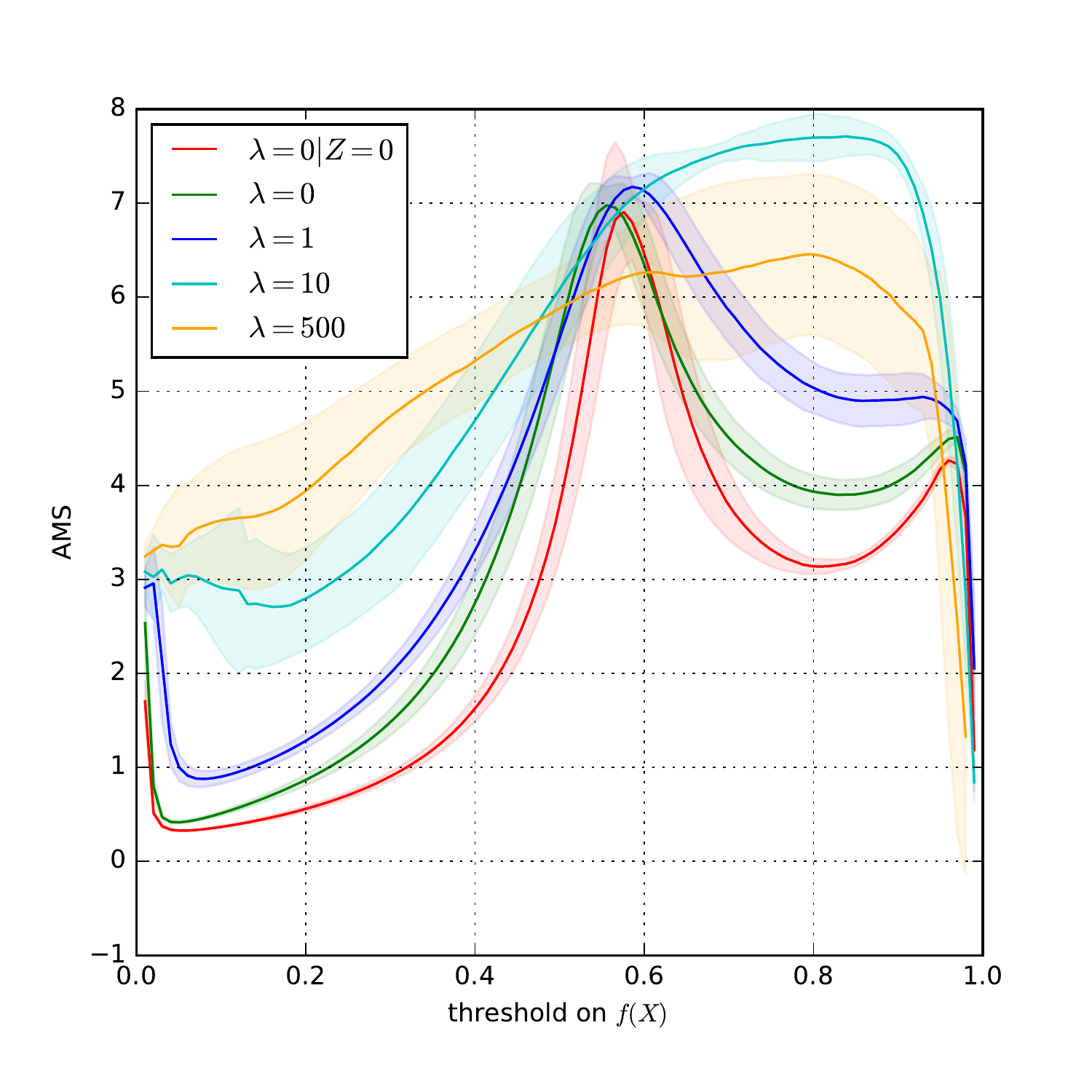}\vspace{-1em}
        \caption{Physics example. Approximate median significance as a function of the decision threshold
                 on the output of $f$. At $\lambda=10$, trading
                 accuracy for independence to pileup
                 results in a net benefit in terms of statistical significance.}
        \label{fig:physics-ams}
\end{minipage}
\end{figure}

\subsection{High energy physics examples}
\label{sec:hep}

\paragraph{Binary Case} Experiments at high energy colliders like the LHC \citep{LHCMachine} are
searching for evidence of new particles beyond those described by the
Standard Model (SM) of particle physics. A wide array of
theories predict the existence of new massive particles that would decay to
known particles in the SM such as the $W$ boson. The $W$ boson is unstable and
can decay to two quarks, each of which produce collimated sprays of particles
known as jets.
If the exotic particle
is heavy, then the $W$ boson will be moving very fast, and  relativistic effects
will cause the two jets from its decay to merge into a single `$W$-jet'. These
$W$-jets have a rich internal substructure.
However,
jets are also produced ubiquitously at high energy colliders through more
mundane processes in the SM, which leads to a challenging classification problem
that is beset with a number of sources of systematic uncertainty.
The classification challenge used here is common in jet substructure
studies (see e.g. \citep{Khachatryan:2014vla,ATL-PHYS-PUB-2015-033,wbosonATLAS}): we aim to distinguish normal jets produced copiously
at the LHC ($Y=0$) and from $W$-jets ($Y=1$) potentially coming from an exotic process.
We reuse the datasets used in
\citep{baldi2016jet}.

Challenging in its own right, this classification
problem is made all the more difficult by the presence of pileup, or multiple
 proton-proton interactions occurring simultaneously with the primary
interaction.  These pileup interactions produce additional particles that can
contribute significant energies to jets unrelated to the underlying
discriminating information. The number of pileup interactions can vary with
the running conditions of the collider, and we want the classifier to be robust
to these conditions. Taking some liberty, we consider an extreme case with
a categorical nuisance parameter, where $Z=0$ corresponds to events without pileup
and $Z=1$ corresponds to events with pileup, for which there are an average  of
50 independent pileup interactions overlaid.

We do not expect that we will be able to find a function $f$ that simultaneously
minimizes the classification loss ${\cal L}_f$ and is pivotal. Thus, we need to optimize
the hyper-parameter $\lambda$ of Eqn.~\ref{eqn:vf-lambda} with respect to
a higher-level objective. In this case, the natural higher-level context is a
hypothesis test of a null hypothesis with no $Y=1$ events against an
alternate hypothesis that is a mixture of $Y=0$ and $Y=1$ events.
In the absence of systematic uncertainties, optimizing ${\cal L}_f$ simultaneously
optimizes the power of a classical hypothesis test in the Neyman-Pearson sense.
When we include systematic uncertainties we need to balance the
classification performance against the robustness to uncertainty in $Z$.
Since we are still performing a hypothesis test against the null, we only
wish to impose the pivotal property on $Y=0$ events. To this end,
we use as a higher level objective the Approximate Median Significance (AMS), which is a natural generalization
of the power of a hypothesis test when systematic uncertainties are taken into account
(see Eqn.~20 of \cite{adam2014higgs}).

For several values of $\lambda$, we train a classifier
using  Algorithm~\ref{alg:adversarial-training} but consider the adversarial
term ${\cal L}_r$ conditioned on $Y=0$ only, as outlined in
Sec.~\ref{sec:problem}. The architecture of $f$ comprises 3 hidden layers of 64 nodes
respectively with tanh, ReLU and ReLU activations, and is terminated by a single final
output node with a sigmoid activation. The architecture of $r$ is the same,
but uses only ReLU activations in its hidden nodes.
As in the previous example, adversarial training is initialized
with $f$ pre-trained. Experiments are performed on a subset of
150000 samples for training while AMS is evaluated on an independent test set of
5000000 samples. Both training and testing samples are weighted such that the
null hypothesis corresponded to 1000  of $Y=0$ events and the alternate
hypothesis included an additional 100 $Y=1$ events prior to any thresholding on $f$.
This allows us to probe the efficacy of the
method proposed here in a representative background-dominated high energy physics environment.
Results reported below are averages over 5 runs.

As  Fig.~\ref{fig:physics-ams} illustrates, without adversarial training (at
$\lambda=0|Z=0$ when building a classifier at the nominal value $Z=0$ only, or
at $\lambda=0$ when building a classifier on data sampled from $p(X,Y,Z)$), the
AMS peaks at $7$. By contrast, as the pivotal constraint
is made stronger (for $\lambda > 0$) the AMS peak moves higher, with a maximum
value around  $7.8$ for $\lambda=10$. Trading classification
accuracy for robustness to pileup thereby results in a net
benefit in terms of the power of the hypothesis test. Setting $\lambda$ too high however
(e.g. $\lambda=500$) results in a decrease of the maximum AMS, by
focusing the capacity of $f$ too strongly on independence with $Z$, at the expense of
accuracy. In effect, optimizing $\lambda$
yields a principled and effective approach to control the trade-off between
accuracy and robustness that ultimately maximizes the
power of the enveloping hypothesis test.

\paragraph{Continous Case}
Recently, an independent group has used our approach to learn jet classifiers
that are independent of the jet mass~\citep{Shimmin:2017mfk}, which is a continuous attribute.
The results of their studies show that the adversarial training strategy works very well for
real-world problems with continuous attributes, thus enhancing the sensitivity of searches for
new physics at the LHC.


\section{Related work}
\label{sec:related}

Learning to pivot can be related to the problem
of domain adaptation
\citep{blitzer2006domain,pan2011domain,gopalan2011domain,gong2013connecting,baktashmotlagh2013unsupervised,ajakan2014domain,ganin2014unsupervised},
where the goal is often stated as trying to learn a domain-invariant
representation of the data. Likewise, our method also relates to the problem of
enforcing fairness in classification
\citep{kamishima2012fairness,zemel2013learning,feldman2015certifying,EdwardsS15,zafar2015fairness,louizos2015variational},
which is stated as learning a classifier that is independent of some chosen
attribute such as gender, color or age. For both families of methods, the
problem can equivalently be stated as learning a classifier which is a pivotal
quantity with respect to either the domain or the selected feature. As an
example, unsupervised domain adaptation with labeled data from a source domain
and unlabeled data from a target domain can be recast as learning a predictive
model $f$ (i.e., trained to minimize ${\cal L}_f$ evaluated on labeled source
data only) that is also a pivot with respect to the domain $Z$ (i.e., trained to
maximize ${\cal L}_r$ evaluated on both source and target data). In this
context, \citep{ganin2014unsupervised,EdwardsS15} are certainly among the
closest to our work, in which domain invariance and fairness are enforced
through an adversarial minimax setup composed of a classifier and an adversarial
discriminator. Following this line of work, our method can be regarded as a unified generalization that also supports
a continuously parametrized family of domains or as enforcing fairness over
continuous attributes.

Most related
work is based on the strong and limiting assumption that $Z$ is a binary
random variable (e.g., $Z=0$ for the source domain, and
$Z=1$ for the target domain). In particular, \citep{pan2011domain,gong2013connecting,baktashmotlagh2013unsupervised,zemel2013learning,ganin2014unsupervised,ajakan2014domain,EdwardsS15,louizos2015variational}
are all based on the minimization of some form of divergence between the two distributions of
$f(X)|Z=0$ and $f(X)|Z=1$. For this reason,
these works cannot directly be generalized to non-binary or
continuous nuisance parameters, both from a practical and theoretical point of view.
Notably, \cite{kamishima2012fairness} enforces fairness through
a prejudice regularization term based on empirical estimates of $p(f(X)|Z)$.
While this approach is in principle sufficient for handling non-binary
nuisance parameters $Z$, it requires accurate
empirical estimates of $p(f(X)|Z=z)$ for all values $z$, which quickly becomes
impractical as the cardinality of $Z$ increases. By contrast, our approach models the conditional
dependence through an adversarial network, which allows for generalization without
necessarily requiring a growing number of training examples.

A common approach to account for systematic uncertainties in
a scientific context (e.g. in high energy physics)
is to take as fixed a classifier $f$ built from training data for a nominal
value $z_0$ of the nuisance parameter, and then propagate uncertainty by
estimating $p(f(x)|z)$ with a parametrized calibration procedure. Clearly, this
classifier is however not optimal for $z \neq z_0$. To overcome this issue, the
classifier $f$ is sometimes built instead on a mixture of training data
generated from several plausible values $z_0, z_1, \dots$ of the nuisance
parameter. While this certainly improves classification performance with respect
to the marginal model $p(X,Y)$, there is no reason to expect the resulting
classifier to be pivotal, as shown previously in Sec.~\ref{sec:toy}. As an
alternative, parametrized
classifiers~\citep{cranmer2015approximating,Baldi:2016fzo} directly take
(nuisance) parameters as additional input variables, hence ultimately providing
the most statistically powerful approach for incorporating the effect of
systematics on the underlying classification task.
In practice, parametrized
classifiers  are also computationally expensive to build and evaluate. In
particular, calibrating their decision function, i.e. approximating
$p(f(x,z)|y,z)$ as a continuous function of $z$, remains an open challenge. By
contrast, constraining $f$ to be pivotal yields a classifier
that can
be directly used in a wider range of applications, since the
dependence on the nuisance parameter $Z$ has already been eliminated.


\section{Conclusions}
\label{sec:conclusions}

In this work, we proposed a flexible learning procedure for building a
predictive model that is independent of continuous or categorical nuisance
parameters by jointly training two neural networks in an adversarial fashion.
From a theoretical perspective, we motivated the proposed algorithm by showing
that the minimax value  of its value function corresponds to a predictive model
that is both optimal and pivotal (if that models exists) or for which one can
tune the trade-off between power and robustness. From an empirical point of
view, we confirmed the effectiveness of our method on a toy example
and a particle physics example.


In terms of applications, our solution can be used in any situation
where the training data may not be representative of the real data the
predictive model will be applied to in practice. In the scientific context, the
presence of systematic uncertainty can be incorporated by considering a family
of data generation processes, and it would be worth revisiting those scientific problems
that utilize machine learning in light of this technique. Moreover, the approach
also extends to cases where independence of the predictive model with respect to
observed random variables is desired, as in fairness for
classification.


\bibliography{bibliography}
\bibliographystyle{apalike}

\end{document}